\newcommand{\R}{\mathbb{R}}
\DeclareMathOperator{\N}{\mathbb{N}}
\newtheorem{definition}{Definition}
\newtheorem{theorem}{Theorem}
\newtheorem{lemma}[theorem]{Lemma}
\newtheorem{assumption}{Assumption}
\newtheorem{example}{Example}
\newtheorem{case}{Case}
\title{\LARGE \bf
A Class of Logistic Functions for Approximating State-Inclusive Koopman Operators}
\author{Charles A. Johnson$^{1}$, Enoch Yeung$^{2}$
\thanks{$^{1}$ Charles Johnson is with the Information and Decision Algorithms Laboratories, Brigham Young University, Provo, UT 84602, USA
        {\tt\small charles.addisonj@gmail.com}}%
\thanks{$^{2}$Enoch Yeung, is a research scientist in the Computational Analytics Division at Pacific Northwest National Laboratories
        {\tt\small enoch.yeung@pnnl.gov}}%
}
\begin{document}

\maketitle
\pagestyle{plain}

\begin{abstract}
An outstanding challenge in nonlinear systems theory is identification or learning of a given nonlinear system's Koopman operator directly from data or models.  Advances in extended dynamic mode decomposition approaches and machine learning methods have enabled data-driven discovery of Koopman operators, for both continuous and discrete-time systems.   Since Koopman operators are often infinite-dimensional, they are approximated in practice using finite-dimensional systems.  The fidelity and convergence of a given finite-dimensional Koopman approximation is a subject of ongoing research.  In this paper we introduce a class of Koopman observable functions that confer an approximate closure property on their corresponding finite-dimensional approximations of the Koopman operator.  We derive error bounds for the fidelity of this class of observable functions, as well as identify two key learning parameters which can be used to tune performance. We illustrate our approach on two classical nonlinear system models: the Van Der Pol oscillator and the bistable toggle switch. 

\end{abstract}

\section{Introduction}
Koopman operators are a class of models used for understanding important dynamical properties of nonlinear systems \cite{arbabi2016ergodic, budivsic2012applied,tu2013dynamic,mezic2013analysis,mezic2005spectral}.  The Koopman operator captures the behavior of nonlinear systems by representing them as higher-order linear systems in a lifted function space. Though posed originally by Bernard Koopman in the early 1930s \cite{koopman1931hamiltonian}, Koopman operator theory has gained traction and popularity quite recently for their applicability in nonlinear system analysis \cite{mezic2005spectral} and as a data driven method for system identification \cite{kutz2016dynamic}, \cite{williams2016extending}.  


In specific instances, a nonlinear system may admit a finite dimensional Koopman operator.  When a Koopman operator is finite dimensional, the evolution of all the system's states are completely characterized as a linear combination of a finite number of lifting or observable functions. In general, Koopman operators can be typically infinite dimensional and have continuous or countably infinite spectra \cite{budivsic2012applied}. Infinite dimensional operators are computationally unwieldy, which motivate the development of learning methods for high fidelity {\it finite dimensional} approximations of the Koopman operator.    


Methods for learning Koopman operators have existed since Torsten Carleman developed a technique for the linearization of  nonlinear systems in finite dimensional representations \cite{carleman1932application}. However Carleman linearization only results in a finite dimensional representation for systems possessing specific internal structure, e.g. feedforward dynamics.  More recently work on finding approximations to Koopman operators include state-of-the-art techniques such as dynamic mode decomposition (DMD) \cite{schmid2010dynamic} and extended dynamic mode decomposition (EDMD)\cite{mezic2004comparison, williams2015data,mezic2005spectral}.  EDMD, in particular, is effective, since it learns the space of Koopman observables using a dictionary of generic basis functions \cite{williams2015data}.  

In particular, recent work has shown the effectiveness of deep \cite{2017arXiv170806850Y} and shallow neural networks \cite{li2017extended} for learning approximate Koopman operators.  However, these methods essentially learn black box dictionaries, where the relationship between the  mathematical form of the neural network outputs and the actual system are unclear.    \cite{2017arXiv170806850Y} showed that neural networks can learn smooth dictionaries consisting primarily of observables with a sigmoidal response profile.   One contribution of this paper is to clarify the value of sigmoidal basis functions in learning approximate Koopman operators. 

Most generally, we consider the problem of learning a finite dimensional approximation to the Koopman generator for a known continuous nonlinear system.  We first identify a class of Koopman basis functions that produces an exact Koopman realization of the same dimension as the original system.  We show however, that these bases functions provide virtually no insight into the stability of the underlying system, since they exclude the system's actual state.

Motivated by these findings, we consider Koopman observables that include the system state. However, most state inclusive Koopman observable spaces suffer from exploding dimensionality. We introduce the property of {\it finite approximate closure}, namely the ability of a state-inclusive Koopman basis with finite cardinality to simultaneously approximate a nonlinear system's dynamics as well as its own flow. We show that Koopman observables obtained from a special class of multi-variate logistic functions, satisfy an approximation property we define as finite approximate closure.  We derive explicit error bounds and show its relationship with learning parameters describing the dictionary resolution and ultra-sensitivity.  We show this class of dictionary functions learns the dynamics of the Van der Pol oscillator and  the bistable toggle switch. 

The rest of this paper is organized as follows.  Section \ref{sec:KLP} introduces the problem of learning a finite dimensional approximation of the so-called Koopman operator.  Section \ref{sec:KBases} motivates the use of Koopman bases that include states of the original system.  Section \ref{sec:SILL} introduces the concept of finite approximate closure and a class of state-inclusive Koopman bases functions that satisfy finite approximate closure.  Section \ref{sec:SILLerror} derives error bounds for this special Koopman basis and Section \ref{sec:examples} illustrates the accuracy of these bases functions for learning the dynamics of two nonlinear systems: the Van Der Pol oscillator and the toggle switch. 

\section{The Koopman Generator Learning Problem}\label{sec:KLP}
Consider a nonlinear system with dynamics
\begin{equation}\label{eq:nonlinear_system}
\dot{x} = f(x)
\end{equation}
where $x \in M \subset \mathbb{R}^n$, $f:\mathbb{R}^n \rightarrow \mathbb{R}^n$ is continuously differentiable, time-invariant, and nonlinear in $x$.  Let $x_0$ denote the initial condition $x(0)$ for the system and $M$ denote the state-space of the dynamical system.  
We introduce the concepts of a Koopman generator and its associated multi-variate Koopman semigroup, following the exposition of \cite{budivsic2012applied}. 

\subsection{The Koopman Generator}

For continuous nonlinear systems, the Koopman semigroup is a semigroup ${\cal K}_{t\in \mathbb{R}}$ of linear but infinite dimensional operators ${\cal K}_t$ that acts on a space of  functions $\psi: M \rightarrow \mathbb{R}$, often referred to as {\it observables}. Each observable function $\psi \in \Psi$, where $\Psi$ is a finite or infinite dimensional function space.   We thus say ${\cal K}_t:\Psi \rightarrow \Psi$ is an operator for each $t\geq 0$.   The Koopman semigroup provides an alternative view for evolving the state of a dynamical system: 
\begin{equation} \mathcal{K}_t \circ \psi(x_0) = \psi \circ \Phi_t(x_0),\end{equation}
where $\Phi_t(x_0)$ is the flow map of the dynamical system (\ref{eq:nonlinear_system}), evolved forward up to time $t$, given the initial condition $x_0$.

Instead of examining the evolution of the state of a dynamical system, the Koopman semigroup allows us to study the forward evolution of observables defined on the state of a dynamical system \cite{williams2015data}. 

The generator ${\cal K_G}$ for the Koopman semigroup is defined as 
\begin{equation}
{\cal K}_G \psi  \equiv \lim_{t\rightarrow 0} \frac{ {\cal K}_t \psi - \psi }{t}
\end{equation}
\begin{lemma}
The Koopman generator ${\cal K}_G$ is a linear operator.
\end{lemma}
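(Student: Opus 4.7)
The plan is to reduce linearity of $\mathcal{K}_G$ to linearity of each $\mathcal{K}_t$ together with linearity of limits. First I would establish that every $\mathcal{K}_t$ in the semigroup is itself a linear operator on $\Psi$. This follows directly from the defining identity $\mathcal{K}_t \circ \psi(x_0) = \psi \circ \Phi_t(x_0)$: for any scalars $\alpha,\beta$ and observables $\psi_1,\psi_2 \in \Psi$, evaluating at an arbitrary $x_0$ gives
\begin{equation}
\mathcal{K}_t(\alpha\psi_1 + \beta\psi_2)(x_0) = (\alpha\psi_1+\beta\psi_2)(\Phi_t(x_0)),
\end{equation}
and the right-hand side splits as $\alpha\psi_1(\Phi_t(x_0)) + \beta\psi_2(\Phi_t(x_0)) = \alpha\mathcal{K}_t\psi_1(x_0) + \beta\mathcal{K}_t\psi_2(x_0)$. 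Since this holds pointwise for every $x_0 \in M$, we get $\mathcal{K}_t(\alpha\psi_1+\beta\psi_2) = \alpha \mathcal{K}_t\psi_1 + \beta \mathcal{K}_t\psi_2$.

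Next I would substitute this into the generator definition. Writing
\begin{equation}
\mathcal{K}_G(\alpha\psi_1 + \beta\psi_2) = \lim_{t\to 0} \frac{\mathcal{K}_t(\alpha\psi_1+\beta\psi_2) - (\alpha\psi_1+\beta\psi_2)}{t},
\end{equation}
I would apply the linearity of $\mathcal{K}_t$ just established to rewrite the numerator as $\alpha(\mathcal{K}_t\psi_1 - \psi_1) + \beta(\mathcal{K}_t\psi_2 - \psi_2)$. Assuming $\psi_1,\psi_2$ lie in the domain of the generator (so both limits exist), standard limit arithmetic then yields
\begin{equation}
\mathcal{K}_G(\alpha\psi_1+\beta\psi_2) = \alpha\,\mathcal{K}_G\psi_1 + \beta\,\mathcal{K}_G\psi_2,
\end{equation}
which is exactly the linearity claim.

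There is no genuinely hard step here; the verification is mechanical once one observes that the Koopman operator acts by precomposition with the flow and that precomposition commutes with pointwise linear combinations. The only subtlety worth flagging is domain-theoretic: $\mathcal{K}_G$ is typically only densely defined, so strictly speaking linearity holds on the subspace of $\Psi$ where the defining limit exists. Since the preceding exposition treats $\mathcal{K}_G$ as defined on $\Psi$, I would simply note this caveat in passing rather than develop it, and conclude.
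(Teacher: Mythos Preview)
Your proposal is correct and follows essentially the same approach as the paper: invoke linearity of each $\mathcal{K}_t$ and then pass through the limit via standard limit arithmetic (what the paper calls the algebraic limit theorem). You give more detail than the paper---explicitly verifying the linearity of $\mathcal{K}_t$ via precomposition and flagging the domain caveat---but the structure is identical.
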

\begin{proof}
Notice that the transformation ${\cal K}_G:\Psi \rightarrow \Psi$ is an operator, since each ${\cal K}_t$ is an operator on $\Psi.$   Moreover, the algebraic limit theorem and the linearity of each ${\cal K}_t$ guarantees linearity of ${\cal K}_G$, which implies it is a linear operator.  
\end{proof}

In general, ${\cal K}_G$ may not have a finite or countably infinite-dimensional matrix representation, since the limit of the spectrum of ${\cal K}_{t\geq 0}$ as $t \rightarrow 0$ may be continuous and therefore uncountably infinite, see \cite{mezic2005spectral} for a thorough study of several examples.     
\subsection{Problem Statement}
We restrict our attention in this paper to systems with finite or countably infinite dimensional Koopman generators ${\cal K}_G$.  Given such a continuous nonlinear dynamical system, specifically $f(x)$ and $x$ from (\ref{eq:nonlinear_system}); we aim to  learn $\psi(x)$ and Koopman generator ${\cal K}_G$ to solve the optimization problem
\begin{equation}\label{eq:objective}
\min_{{\cal K_G}, \psi \in \Psi} \left\Vert  \frac{d\psi(x(t))}{dt}  - {\cal K}_G \psi(x(t)) \right\Vert
\end{equation}

This optimization problem is often non-convex, since the form of $\psi(x(t))$ is unknown or parametrically undefined. Both the Koopman generator and the basis functions must be discovered simultaneously to minimize the above objective function.  This is true in data-driven formulations of the problem  where $f(x)$ is completely unknown. Additionally, it is true in learning problems  where $f$ is known but ${\cal K}_G$ has yet to be discovered.

A solution pair $({\cal K}_G, \psi(x))$ that achieves exactly zero error is an exact realization of a Koopman generator and its associated observable function. In general, there may be multiple solutions that achieve exactly zero error.  To see this, note that if 
\begin{equation}
 \frac{d \psi(x(t))}{dt} = {\cal K}_G \psi(x(t)) 
\end{equation}
then a state transformation $\varphi(x(t)) = T^{-1} \psi(x(t))$ also defines an exact solution pair 
$(T{\cal K}_GT^{-1},T^{-1} \psi(x(t))).$

Solving for an exact solution pair $({\cal K}_G,\psi(x) )$ in practice may be difficult for at least three reasons.  First, evaluating $\frac{d\psi(x(t)))}{dt}$ requires numerical differentiation, which incurs a certain degree of numerical error. Second, ${\cal K}_G$ may be infinite dimensional and the collection of observable functions $\Psi \equiv \{\psi_1, \psi_2 ,...\psi_{n_L} \}, n_L \leq \infty$ is unknown {\it a priori}. 

We refer to any solution pair $({\cal K}_G, \psi(x) )$ that results in a non-zero error as an approximate solution.  Note that, given vector norm $||\cdot||$, the error for any approximate solution may be specific to a particular $\epsilon(t)$, of the form 
\begin{equation}
\epsilon(x) = \left\Vert\frac{d \psi(x(t))}{dt}  - {\cal K}_G \psi(x(t)) \right\Vert 
\end{equation}
and thus may vary as a function of $x.$   We seek the best approximation that minimizes $\epsilon(x)$ over all $x \in \mathbb{R}^n.$

The goal of Koopman generator learning is thus to obtain a ``lifted" linear representation of a nonlinear system, defined on a set of observable functions, that enables direct application of the rich body of techniques for analyzing linear systems.  Even if it is only possible to identify an approximate solution that minimizes $\epsilon(x) < M$ , for all $x \in {\cal P}$; spectral analysis of the system can provide insight into the stability of the underlying nonlinear system within the region ${\cal P}$ of the phase space.    


\section{Selection of Koopman Basis Functions}\label{sec:KBases}
The standard approach for learning ${\cal K}_G$ and $\Psi$ is to first postulate a set of dictionary functions that approximate and span $\Psi$ or a subspace of $\Psi$ and second, estimate ${\cal K}_G$ given fixed $\psi(x).$   This approach is known as extended dynamic mode decomposition.   The technique involves constructing a set of dictionary functions $\Psi^D =\{\psi_1, ..., \psi_{N_D}\},$ evaluating the dictionary over a time-shifted stack of state trajectories 
\begin{equation*}
X_p = \begin{bmatrix} x(t_{n+1}) & \hdots & x(t_{0}) \end{bmatrix},\mbox { \hspace{0.1mm} } X_f = \begin{bmatrix} x(t_n) & \hdots & x(t_{1}) \end{bmatrix} 
\end{equation*}
to obtain 
\begin{equation} \begin{aligned}
\begin{array}{ccc}
\Psi(X_f)  &=  \left[\begin{array}{c|c|c}\psi(x^{(0)}_{n+1})  &  \hdots  &  \psi(x^{(0)}_1) \\ \vdots  &  \ddots  &  \vdots \\ \psi(x^{(p)}_{n+1}) & \hdots & \psi(x^{(p)}_1 )  \end{array} \right]   \\
\Psi(X_p)  & =  \left[\begin{array}{c|c|c}\psi(x^{(0)}_{n})  &  \hdots  &  \psi(x^{(0)}_0) \\ \vdots  &  \ddots  &  \vdots \\ \psi(x^{(N_D)}_{n}) & \hdots & \psi(x^{(N_D)}_0 )  \end{array} \right]. \end{array}
\end{aligned} \end{equation}
and approximating the Koopman operator by minimizing the (regularized) objective function
\begin{equation}\label{eq:edmd}
|| \Psi(X_f) - K \Psi(X_p) ||_2  + \zeta ||K||_{2,1} 
\end{equation}
where $K$ is the finite approximation to the true Koopman operator, ${\cal K}$, for a discrete time system, and $||{\cal K}||_{2,1}$ is the 1-norm of the vector of 2-norms of each column of ${\cal K}.$

Note that $\zeta =0$ provides the classical formulation for extended dynamic mode decomposition.   As suggested by the notation, this approach is most commonly applied in the study of open-loop nonlinear discrete-time dynamical systems, see \cite{williams2015data,li2017extended,kutz2016dynamic} for several examples.   More recently, \cite{proctor2016dynamic,williams2016extending,proctor2016generalizing} illustrated the ability to learn control Koopman operators for discrete time dynamical systems, which introduce a new class of control-Koopman learning problems. 

The Koopman generator learning problem is the continuous time analogue of minimizing (\ref{eq:edmd}).  However, when preforming Koopman learning in a continuous dynamical system, the matrix $\Psi(X_f)$ must be replaced with a finite-difference approximation of the derivative matrix $\frac{d}{dt} \Psi(X_f).$  So, in general, the accuracy of this estimate is sensitive to the finite-difference approximation used. 

The purpose of this paper is to study the quality of a class of observable functions for estimating a Koopman generator in finite dimensions.  To this end, we restrict our attention to Koopman generator learning problems where the underlying function $f(x)$ is known, but difficult to analyze using local linearization methods. 
\begin{assumption}
Given a nonlinear system (\ref{eq:nonlinear_system}), we suppose that the underlying vector field $f(x)$ is known. 
\end{assumption}
This allows us to evaluate the quality of a class of observable functions, independent of the error imposed by any finite-difference scheme for estimating the derivative 
\[\frac{d}{dt} \Psi(X_f).\]
This leaves us with two challenges. First, identifying a suitable dictionary of observables or lifting functions from which to construct the observable functions $\psi(x(t)).$  Second, identifying or estimating ${\cal K}_G$, given $f(x)$.   

\subsection{Understanding Stability with Koopman Observables}
Not all Koopman observables (or what we will refer to as Koopman liftings) yield insight into the stability of the underlying system.  For example, suppose that we are given a nonlinear system of the form (\ref{eq:nonlinear_system}).  Further suppose that $f$ is invertible on ${\cal M}\subset \mathbb{R}^n.$ Then 
the Koopman generator, $\mathcal{K}_G$ must satisfy: 
\begin{equation}
\frac{d\psi(x)}{dt} = \mathcal{K}_G\psi(x)
\end{equation}
Let $c \in \mathbb{R}$ be an arbitrary constant. First, we choose a set of functions $\{ \mathcal{L}_1, \mathcal{L}_2, \hdots,\mathcal{L}_n | \mathcal{L}_i: \R^n\rightarrow \R, \forall i\in \{1, 2, \hdots,n\}\}$ so that:
\begin{equation}
\mathcal{L}_i(x)= \int_0^x cf^{-1}(\tau)d\tau
\end{equation}
 Notice that the Koopman observable function $\psi(x)$ defined as
\begin{equation}
\psi(x) \equiv \begin{bmatrix}
\psi_1(x)\\
\psi_2(x)\\
\vdots \\
\psi_n(x)
\end{bmatrix} \equiv \begin{bmatrix}
e^{\mathcal{L}_1(x)}\\
e^{\mathcal{L}_2(x)}\\
\vdots \\
e^{\mathcal{L}_n(x)}
\end{bmatrix}
\end{equation}
has time-derivative that can be expressed in state-space form as 
\begin{equation}
\begin{aligned}
\frac{d}{dt} \psi(x(t)) & = 
\frac{d}{dt}\begin{bmatrix}
e^{\mathcal{L}_1(x)}\\
e^{\mathcal{L}_2(x)}\\
\vdots \\
e^{\mathcal{L}_n(x)}
\end{bmatrix}   
= \begin{bmatrix}
e^{\mathcal{L}_1(x)}\\
e^{\mathcal{L}_2(x)}\\
\vdots \\
e^{\mathcal{L}_n(x)}
\end{bmatrix} cf^{-1}(x)f(x)  
\\&=  cI\begin{bmatrix}
e^{\mathcal{L}_1(x)}\\
e^{\mathcal{L}_2(x)}\\
\vdots \\
e^{\mathcal{L}_n(x)}
\end{bmatrix} 
 = K_G \psi(x)
\end{aligned}
\end{equation}
where the Koopman generator is $\mathcal{K}_G = cI.$  Since $c$ is arbitrary, the system can be either stable or unstable, depending on the sign of the choice of $c$.  

Our choice of observable functions provides an exact solution to the Koopman generator learning problem.  The spectral properties of ${\cal K}_G$ are easy to explore as each eigenvalue is simply equal to $c$.  However, this result is uninformative since the stability properties of the Koopman generator are totally dependent on an arbitrary constant and are therefore completely divorced from the vector field $f$.

The key property that is lacking in the above example is the inclusion of the underlying state dynamics in the observable function $\psi(x)$.  Whenever $x$ is contained within $\psi$, this guarantees that any Koopman generator ${\cal K}_G$ and its associated Koopman semigroup $\{\cal K\}_{t \geq 0}$, not only describes the time evolution of $\psi(x)$ but also the underlying system.  Specifically, if $\psi(x) = \left(\psi_1(x), \psi_2(x) ,..., \psi_{n_L}(x)\right)$  contains a $\psi_j(x) = x$ that is the so-called full state observable function, then by definition, \[ d\psi_j(x)/dt = f(x).\]   Thus, the spectrum of ${\cal K}_G$ will characterize the stability of $\psi(x)$, including $\psi_j(x)$.   When $d\psi_j(x)/dt = f(x)  \in \text{span}\{\psi_1,...,\psi_{n_L} \}$ and $d \psi(x)/dt = {\cal K_G} \psi(x)$, we say the system has {\it finite exact closure}.  That is, derivatives of the full-state observable $\psi_j(x) = x$  and the rest of $\psi_i(x), i \neq j$ can be described entirely in terms of the state vector $\psi(x).$  This property does not hold for many nonlinear candidate observable functions.  We give an example: 

\begin{example}
Consider a scalar nonlinear system of the form 
\begin{equation}
\dot{x} = f(x) = -x^2
\end{equation}
First, consider a candidate observable function $\psi(x) = \left( 1, x, x^2\right).$  We want to see if 
\begin{equation}
\dot{\psi}(x)  = {\cal K}_G\psi(x)
\end{equation}
for some ${\cal K}_G.$  Calculating explicitly, we get 
\begin{equation}
\dot{\psi}(x) = \begin{bmatrix} 0 \\ -\psi_3(x) \\ -2\psi_2(x)\psi_3(x)\end{bmatrix}
\end{equation}
The issue is that including $x$ in the state requires including $f(x)$ as part of the derivative, which implies that each time $x$ multiplies $f(x)$, you obtain a cubic term which is not included in $\psi(x)$.  Similarly, including cubic terms in $\psi(x)$ results in quartic terms and so on.   This is an example of a system where $\psi(x)$ defined above, does not satisfy finite exact closure.  This is not to say that the system can not be expressed with finite closure, but that our proposed observable function $\psi$ does not satisfy the finite exact closure property.   
\end{example}

\subsection{Finite Approximate Closure}

In general, systems that are not globally topologically conjugate to a finite dimensional linear system, e.g. systems with multiple fixed points, cannot be represented exactly by a finite-dimensional linear Koopman operator that includes the state as part of the set of observables \cite{brunton2016koopman}.  

However, it may be possible to learn a Koopman observable function $\Psi(x)$ that {\it  approximately} satisfies finite closure, defined as follows: \begin{definition} 
Let $\Psi(x):M \rightarrow \mathbb{R}^{N_L}$ where $N_L < \infty$.  
We say $\Psi(x)$ achieves finite $\epsilon$-closure or {\it finite approximate closure} with $O(\epsilon)$ error if and only if  there exists an ${\cal K}_G \in \mathbb{R}^{n\times n}$ and $\epsilon > 0$ such that 
\begin{equation}
\frac{d}{dt} \left(\Psi(x)  \right) = {\cal K}_G \psi(x) + \epsilon(x) 
\end{equation}
We say that $\Psi(x)$ achieves uniform finite approximate closure for some set ${\cal P} \subset M$ if and only if it achieves finite approximate closure with $|\epsilon(x)| < B \in \mathbb{R}$ for all $x \in {\cal P}.$

\end{definition}

Finite approximate closure is a desirable property since, as $\epsilon \rightarrow 0$,  we may use $\mathcal{K}_G$ to preform high fidelity stability, observability and spectral analysis.  For example, if $\epsilon$ is small enough over all x(t) in M, one could study the target trajectory of $x(t)$ given $x_0$ by studying the evolution of a state-inclusive lifting of observable functions, or  $\dot{\psi}(x)  = \mathcal{K}_G \psi(x)$.  Projecting from  $\psi$ to  $x$ is trivial and it's trajectory, an approximation to $x(t)$, may yield stability insights.

By a similar token we also may consider observability analysis and state prediction problems \cite{surana2016linear,vaidya2007observability}. Given a series of measurements with corruption in the model and noise in the measurements, can one predict the state of the system? Under the condition of finite approximate closure and a sufficiently small $\epsilon$ the error of state estimation on the state inclusive lifting of the system (evolving according to the linear relation given by $\mathcal{K}_G$) should also be small.  For more extensive treatment in the use of Koopman operators in the state prediction problem (in discrete time) see \cite{surana2016linear}.

Finally we note that given a matrix $A$ with a spectrum $\sigma_A$, if one adds a perturbation matrix, $P$, where $\Vert P\Vert< \varepsilon_1$, there are established limits on how the spectrum of $A+P$, $\sigma_{A+P}$,  will vary from $\sigma_A$. For example, there are the bounds established in the Hoffman-Wielandt theorem.  So the spectrum of a weakly perturbed matrix is weakly altered. Therefore, if $\mathcal{K}_G^*$ is a close approximation to the true Koopman generator of a system, we can estimate the spectral distribution of the true Koopman generator, including its principal modes and eigenvalues \cite{mezic2005spectral}.  Finite approximate closure of $\mathcal{K}_G^*$ of order $\epsilon$  guarantees bounded error between $\mathcal{K}_G^*$ and an ideal Koopman generator.  Moreover, certain learning parameters can be tuned to arbitrarily reduce the size of $\epsilon$.

\section{State Inclusive Logistic Lifting (SILL) Functions and Finite Approximate Closure} \label{sec:SILL}
To develop an approximation to $\mathcal{K}_G$ we introduce a new class of conjunctive logistic functions. We do so for several reasons. Firstly, logistic functions have well established  functional approximation properties \cite{ito1992approximation}.  Secondly, we now show that sets of logistic functions in this class of models, satisfying a total order, satisfy finite approximate closure.

\begin{figure}
\includegraphics[width=250pt]{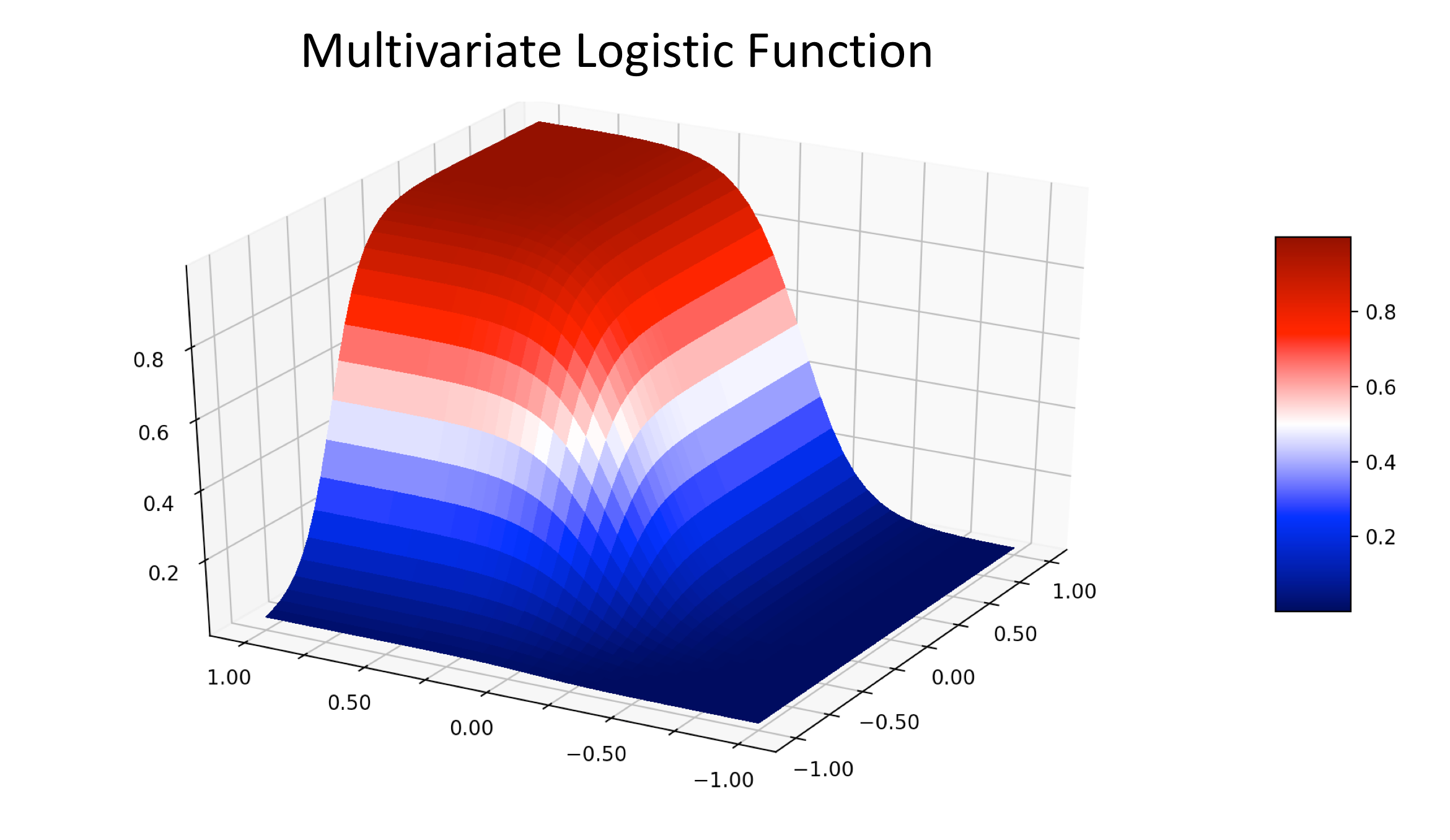}
\caption{This graphic demonstrates an example of our a 2 variable product of logistic functions.  This function would be an example of one of the lifting functions in our proposed lifting scheme for a 2-state system.}
\label{fig:2var_sig_func}
\end{figure}



We define a multivariate conjunctive logistic function as follows:
\begin{equation}
\Lambda_{v_l} (x) \equiv \prod_{i=1}^{n}\lambda_{\mu_i}(x_i)
\end{equation}
where $x \in \mathbb{R}^n$, $ v = (\mu^l_1,...,\mu^l_n),$ and the logistic function $\lambda_{\mu}(x)$ is defined as 
\begin{equation}\label{logistic}
\lambda_{\mu}(x) \equiv \frac{1}{1+e^{-\alpha (x-\mu)}}.
\end{equation}
The parameters $\mu_i$ define the centers or the point of activation along dimension $x_i, i = 1,...,n.$  The parameter $\alpha$ is the steepness parameter, or sensitivity parameter, and determines the steepness of the logistic curve.   Given $N_L$ multivariate logistic functions, we then define a {\it state inclusive logistic lifting} function as $\psi : \R^n\rightarrow \R^{1+n+N_L}$ so that:
\begin{equation}
\psi(x) \equiv \begin{bmatrix}
1\\
x\\
\Lambda
\end{bmatrix}
\end{equation}
where $\Lambda = [\Lambda_{v_1}, \Lambda_{v_2},\hdots,\Lambda_{v_{N_L}} ]^T(x)$. We then have that $\mathcal{K}_G\in\R^{1+n+N_L\times 1+n+N_L}$.  
We first suppose there exists vectors $\{ {\bf w}_i \in\R^{N_L}| \forall i\in\{1, 2, \hdots, N_L\} \}$,   that $f$ can be well approximated by logistic functions \cite{ito1992approximation}, as follows:
\begin{equation}\label{eq:f_regression}
f(x) \approxeq \sum_{l=1}^{N_L} {\bf w}_l \Lambda_{v_l}(x)   
\end{equation}
 This is a fair assumption since the number of logistic functions can be increased until the accuracy of (\ref{eq:f_regression}) is satisfactory.  This accuracy depends on a mesh resolution parameter, which we refer to as $\epsilon$.  This is also generally true of any candidate dictionary for generating Koopman observable functions, e.g. Hermite polynomials, Legendre polynomials, radial basis functions, etc. 

The critical property that enables a high fidelity finite approximate Koopman operator is finite approximate closure. We must show that the time-derivative of these functions can be expressed (approximately) recursively.
The derivative of this multivariate logistic function $\Lambda_{v_l}(x)\in\mathbb{R}$ is given as
\begin{equation}
\dot{\Lambda}_{v_l}(x) = \left(\nabla_x\Lambda_{v_l}(x)\right)^T\frac{\partial x}{\partial t} = \left(\nabla_x \Lambda_{v_l}(x)\right)^T f(x) 
\end{equation}
where the $i^{\text{th}}$ term of the gradient of $\Lambda_{v_l}(x)$ is expressed as 
\begin{equation}
\begin{aligned}
\left[\nabla_x \Lambda_{v_l}(x) \right]_i &=\alpha(\lambda_{\mu^l_i}(x_i) - \lambda_{\mu^l_i}(x_i)^2) \frac{\Lambda_{v_l}(x)}{\lambda_{\mu^l_i}(x_i)} \\
&= \alpha(1 - \lambda_{\mu^l_i}(x_i)) \Lambda_{v_l}(x)\\  
\end{aligned}
\end{equation}
Notice that the time-derivative of $\Lambda_{v_l}$ can be expressed as
\begin{equation}\label{mv_logistic_prime}
\begin{aligned}
\dot{\Lambda}_{v_l}(x)& = \sum_{i=1}^{n} \alpha(1 - \lambda_{\mu^l_i}(x_i)) \Lambda_{v_l}(x) f_i(x)\\ 
&= \sum_{i=1}^{n} \alpha(1 - \lambda_{\mu^l_i}(x_i)) \Lambda_{v_l}(x) \sum_{k=1}^{N_L}w_{ik} \Lambda_{v_k}(x)\\ 
& = \sum_{i=1}^{n}\sum_{k=1}^{N_L} \alpha(1 - \lambda_{\mu^l_i}(x_i)) w_{ik}  \Lambda_{v_l}(x) \Lambda_{v_k}(x)
\end{aligned}
\end{equation}
Thus, we have that the derivative of our multivariate logistic function is a sum of products of logistic functions with a number of predetermined centers.  
There is one critical property that must be satisfied in order to achieve finite approximate closure:
\begin{assumption}
There exists an total order on the set of conjunctive logistic functions $\Lambda_{v_1(x)} ,..., \Lambda_{v_{N_L}}(x)$, induced by the positive orthant $R^n_+$, where $v^l \gtrsim v^k$ whenever $v^k - v^l \in \mathbb{R}^n_+$. 
\end{assumption}
This assumption is satisfied whenever the conjunctive logistic functions are constructed from a evenly spaced mesh grid of points $v_1,...,v_{N_L}$ .  For the purposes of this paper, we will consider evenly spaced mesh grids.  We leave the study of algorithms for learning sparse conjunctive logistic bases for future work. 

Since we have imposed a total order on our logistic basis functions $\mu_l \lesssim \mu_k$ whenever $l \leq k$, we have that the derivative of $\Lambda_{v_{max}(l,k)}$ is the derivative of $\Lambda_{v_{k}}.$ 
Thus we can write
\begin{equation}
\begin{aligned}
\frac{d\Lambda_{\mu_l}}{dt}=  & \sum_{i=1}^{n}\sum_{k=1}^{N_L} \alpha(1 - \lambda_{\mu^l_i}(x_i)) w_{ik}  \Lambda_{v_l}(x) \Lambda_{v_k}(x) \\ 
& \approxeq \sum_{i=1}^{n}\sum_{k=1}^{N_L} \alpha(1 - \lambda_{\mu^l_i}(x_i)) w_{ik}  \Lambda_{v_{max}(l,k)}(x) 
\end{aligned}
\end{equation}
where 
\begin{equation}
v_{max}(l,k) = \left( \max\{\mu^l_1,\mu^k_1\}, ..., \max\{\mu^l_n,\mu^k_n\} \right).
\end{equation}
which shows that conjunctive logistic functions satisfy finite approximate closure. 

\begin{figure}
\includegraphics[width=250pt]{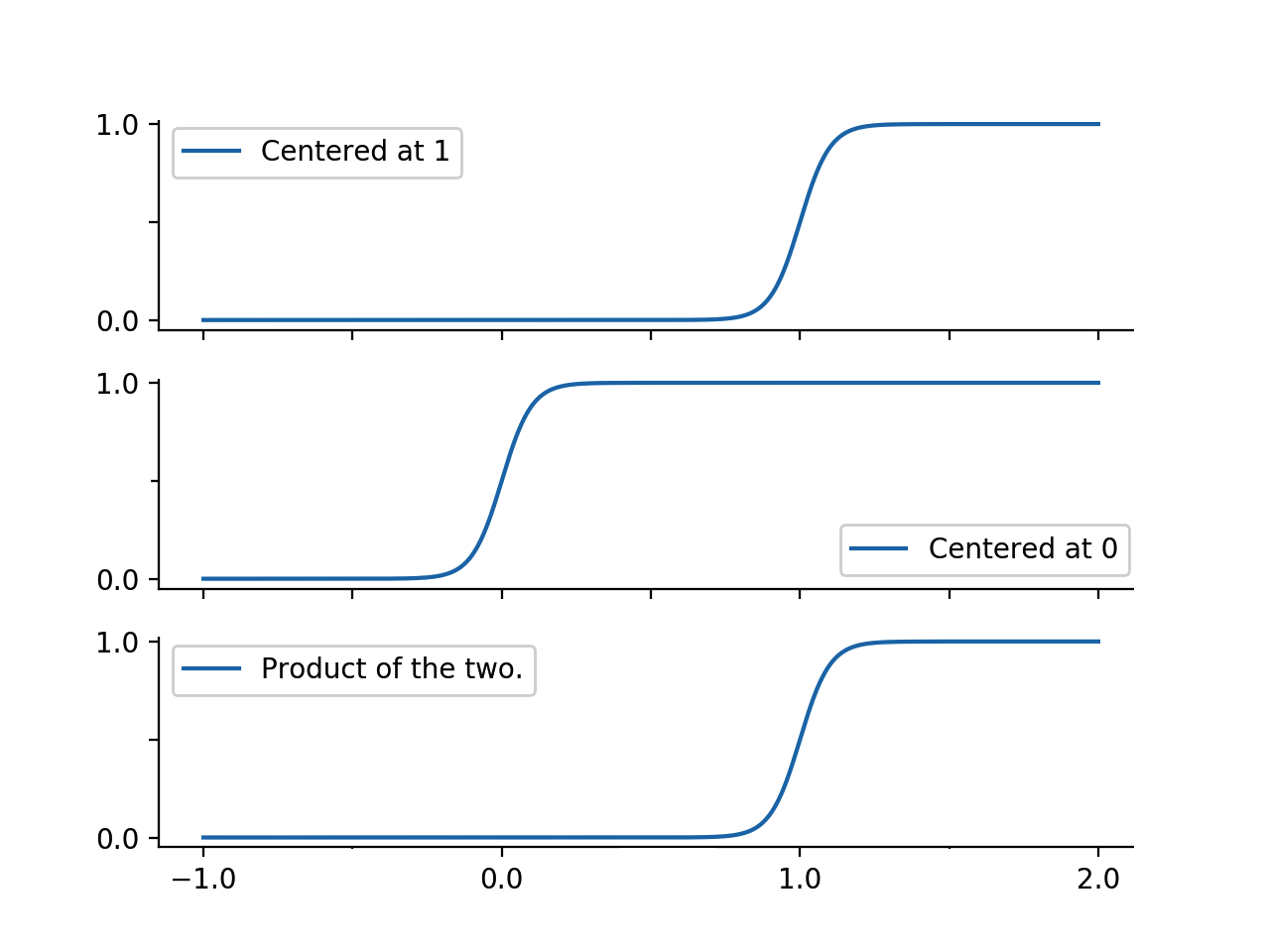}
\caption{This graphic demonstrates how a product of logistic functions may be approximated by the logistic function with the rightmost center.}
\label{fig:prod_of_logistics}
\end{figure}
\begin{figure}
\centering
\includegraphics[width=0.75\columnwidth]{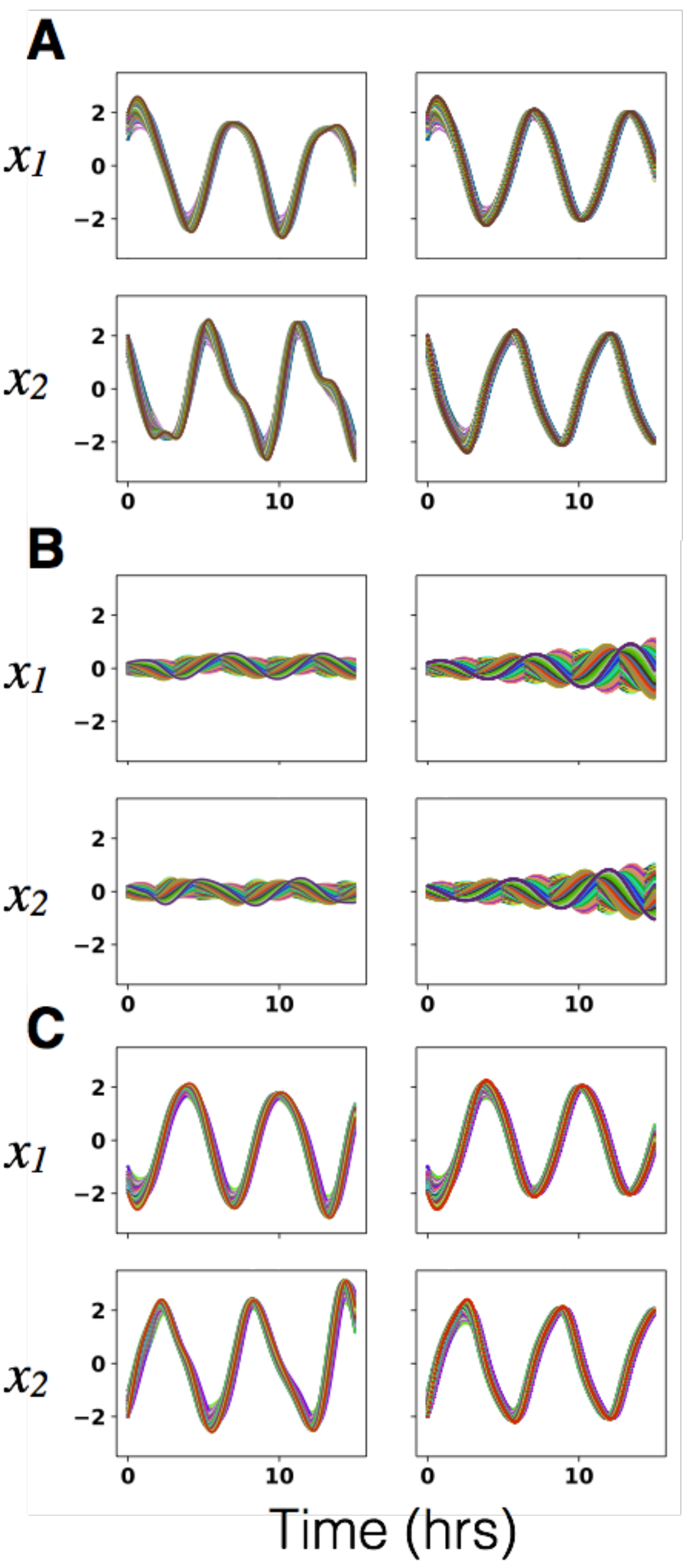}
\caption{The state t rajectories of the Van Der Pol oscillator on the left, Koopman approximations on the right.  (A) State evolution with positive initial conditions.   (B) State evolution with  initial conditions near $0.$ (C) The state trajectories of the Van Der Pol oscillator state evolution with negative initial conditions.}
\label{fig:vdp}
\end{figure}

\section{Convergence and Error Bounds}\label{sec:SILLerror}
Even if SILL functions satisfy finite approximate closure, it is necessary to evaluate the fidelity of their approximation.  We first show that fidelity of the approximation increases with the steepness parameter $\alpha$ and derive a global error bound. 
\subsection{Convergence in $\alpha$}
Without loss of generality  we let $v^l \gtrsim v^k$. Then the difference between each of the $n\times N_L$ terms in the summation is a scaling of the difference :
\begin{equation}\label{dualerrorterm}
\begin{aligned}
\alpha\Lambda_{v^l} & (x)\Lambda_{v^k}(x) - \alpha\Lambda_{v^l}(x) \\ 
&=\alpha\Lambda_{v^l} (x)\left(\Lambda_{v^k}(x) -1\right)\\ 
&=\alpha\frac{1-\Lambda_{v^k}(x)^{-1}}{(\Lambda_{v^l}(x)\Lambda_{v^k}(x))^{-1}} \\
&=\alpha \frac{1 -  (1+e^{-\alpha(x_1 - \mu^l_1)})   ... (1+e^{-\alpha(x_1 - \mu^l_n)}) }{\prod_{i=1}^{n}(1+e^{-\alpha(x_i - \mu^l_i)}) (1+e^{-\alpha(x_i - \mu^k_i)})} \\
&= \frac{\alpha}{\prod_{i=1}^{n}s_{il}(x)s_{ik}(x)} - \frac{\alpha}{\prod_{i=1}^n s_{il}(x)}
\end{aligned}
\end{equation}
where $s_{ij}(x) = (1+e^{-\alpha(x_i-\mu_i^j)})$.
Based on this error term we have the following theorem:
\begin{theorem}
Given that $x_i\neq\mu_i \forall i\in \{1, 2, ..., n\}$. As $\alpha \rightarrow \infty$, the error between $d\Lambda/dt $ and its approximation 
\[
\sum_{i=1}^{n}\sum_{k=1}^{N_L} \alpha(1 - \lambda_{\mu^l_i}(x_i)) w_{ik}  \Lambda_{v_{max}(l,k)}(x) 
\]
converges to $0$. 
\end{theorem}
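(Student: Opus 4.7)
The plan is to reduce the stated global convergence question to pointwise exponential decay of each per-index error term, using the saturating behavior of logistic functions as $\alpha \to \infty$.

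First I would observe that the total error $d\Lambda_{v_l}/dt$ minus its approximation is a finite sum over $(i,k) \in \{1,\dots,n\} \times \{1,\dots,N_L\}$ of terms of the form
\[
\alpha (1 - \lambda_{\mu^l_i}(x_i)) w_{ik} \bigl[\Lambda_{v^l}(x)\Lambda_{v^k}(x) - \Lambda_{v_{max}(l,k)}(x)\bigr].
\]
The scalar prefactor $(1 - \lambda_{\mu^l_i}(x_i)) w_{ik}$ is uniformly bounded in $\alpha$ (the $\lambda$ factor lies in $(0,1)$ and the weights are fixed), so it suffices to prove that the bracketed quantity, multiplied by $\alpha$, tends to $0$ as $\alpha \to \infty$. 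This is precisely the dual error in (\ref{dualerrorterm}), which (under the WLOG ordering $v^l \gtrsim v^k$, so $v_{max}(l,k) = v^l$) factors as $\alpha \Lambda_{v^l}(x)\bigl(\Lambda_{v^k}(x) - 1\bigr)$.

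Next I would split into two exhaustive cases based on the position of $x$ relative to $v^k$. In Case A, every component satisfies $x_i > \mu^k_i$. Each factor saturates with $1 - \lambda_{\mu^k_i}(x_i) \leq e^{-\alpha(x_i - \mu^k_i)}$, so by an easy telescoping estimate $1 - \Lambda_{v^k}(x) \leq n\, e^{-\alpha c}$ with $c = \min_i (x_i - \mu^k_i) > 0$ once $\alpha$ is large enough; combined with $\Lambda_{v^l}(x) \leq 1$, the dual error is bounded by $\alpha n e^{-\alpha c}$. In Case B, some coordinate $x_j < \mu^k_j$, and the ordering $\mu^l_j \geq \mu^k_j$ forces $x_j < \mu^l_j$ as well; then $\lambda_{\mu^l_j}(x_j) \leq e^{-\alpha(\mu^l_j - x_j)}$, so $\Lambda_{v^l}(x) \leq e^{-\alpha c'}$ with $c' = \mu^l_j - x_j > 0$, and the dual error is bounded by $\alpha e^{-\alpha c'}$. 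In either case the bound has the form $\alpha e^{-\alpha c} \to 0$, and the finite sum of such terms therefore also vanishes.

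The main obstacle, and the reason for the hypothesis $x_i \neq \mu_i$, is that if $x_i$ coincides exactly with one of the relevant centers then $\lambda_{\mu^j_i}(x_i) = 1/2$ for every $\alpha$, and neither of the exponential decay estimates is available; the constant $c$ extracted in Case A or B would degenerate to $0$ and the linear $\alpha$ prefactor would no longer be absorbed. Requiring $x_i \neq \mu_i$ ensures strictly positive separations pointwise, which is exactly what the argument needs. A natural strengthening, not required for this theorem, is to promote the pointwise result to a uniform estimate on any compact subset of $M$ bounded away from the center set, which would amount to taking $c$ uniform in $x$; this would also yield the quantitative error bound envisioned for the rest of Section \ref{sec:SILLerror}.
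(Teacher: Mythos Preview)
Your proposal is correct and follows essentially the same strategy as the paper: reduce to the dual error term $\alpha\Lambda_{v^l}(x)(\Lambda_{v^k}(x)-1)$ and split into the two cases ``all $x_i>\mu^k_i$'' versus ``some $x_j<\mu^k_j$,'' using the total order to push the bad factor onto either $\Lambda_{v^k}-1$ or $\Lambda_{v^l}$. The only difference is cosmetic: you give explicit exponential bounds of the form $\alpha e^{-\alpha c}$, whereas the paper argues more loosely via limits of the individual logistic factors (and in the first case writes the conclusion as ``$\alpha-\alpha=0$''); your treatment of the $\alpha$ prefactor is therefore slightly cleaner, but the underlying decomposition and idea are identical.
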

\begin{proof}
The error term between $d\Lambda/dt$ and \[ \sum_{i=1}^{n}\sum_{k=1}^{N_L} \alpha(1 - \lambda_{\mu^l_i}(x_i)) w_{ik}  \Lambda_{v_{max}(l,k)}(x) \] is comprised of terms described by equation (\ref{dualerrorterm}), namely 
\begin{equation}
 \frac{\alpha}{\prod_{i=1}^{n}s_{il}(x)s_{ik}(x)} - \frac{\alpha}{\prod_{i=1}^n s_{il}(x)},
\end{equation}
where $s_{ij}(x) = (1+e^{-\alpha(x_i-\mu_i^j)})$.  We observe three cases.  In each of these cases we hold $x$ constant and allow $\alpha$ to vary.

\textbf{Case 1},  $x_i-\mu_i = 0$: So, $\frac{1}{1+ e^{-\alpha(x_i-\mu_i)}} = \frac{1}{2}$ for all $\alpha \in \R^+$. 

\textbf{Case 2}, $x_i-\mu_i > 0$: As $\alpha \rightarrow \infty$ we have that $e^{-\alpha(x_i-\mu_i)}\rightarrow 0$ and so $\frac{1}{1+ e^{-\alpha(x_i-\mu_i)}} \rightarrow 1$. 

\textbf{Case 3}, $x_i-\mu_i < 0$: As $\alpha \rightarrow \infty$ we have that $e^{-\alpha(x_i-\mu_i)}\rightarrow \infty$ and so $\frac{1}{1+ e^{-\alpha(x_i-\mu_i)}} \rightarrow 0$.  We further note that as $\alpha \rightarrow \infty$, $\frac{\alpha}{1+ e^{-\alpha(x_i-\mu_i)}} \rightarrow 0$.

Defining $S_p=\{1, 2, ..., p\}$ for  $p\in\N$, the cases above imply that if there exists $i\in S_n$ for every $ k\in S_{N_L}$,  so that $x_i-\mu^k_i < 0$, then (\ref{mv_logistic_prime}) goes to 0 as $\alpha \rightarrow \infty$.  

Furthermore, if $x_i-\mu^k_i > 0$ then $\forall i\in S_n$ we have that $x_i-\mu^l_i > 0$, this also implies that  (\ref{dualerrorterm}) goes to $\alpha-\alpha=0$ as $\alpha \rightarrow \infty$.  

Since, by assumption $x_i - \mu^k_l \neq 0$ we have that our error of each term goes to zero.  Thus the sum of each of these terms comes to zero as well, as they are each only multiplied by a constant with respect to $\alpha$.
\end{proof}
Almost everywhere (for $x_i \neq \mu_i$) the error converges to $0$. At $x_i = \mu_i$, there is a small error incurred due to the approximation of a product of two totally-ordered conjunctive logistic functions with the ``greatest'' element of the pair.  This error never goes to zero without introducing additional SILL functions to aid in the approximation.  

\subsection{Global Error Bounds}
Denote the error term in (\ref{dualerrorterm}) as  $E_{kl}(x)$. Given a fixed $\mu^l$, $\mu^k$ and $\alpha$, the error is bounded above by $M_{kl}\in\R$.  We calculate $M_{kl}$ by taking the derivative of (\ref{dualerrorterm}), a gradient, and setting each of its $n$ terms to zero:
\begin{equation}
\begin{aligned}
\nabla E_{kl}(x)_j = & \frac{\eta(x)}{s_{jl}(x)s_{jk}(x)\prod_{i=1}^n s_{il}(x)s_{ik}(x)} \\ 
&- \frac{\alpha^2 e^{-\alpha (x_j-\mu_j^l)}}{s_{jl}(x)\prod_{i=1}^n s_{il}(x)} = 0\\
\end{aligned}
\end{equation}
where 
\begin{equation}
\eta(x) \equiv \alpha^2(e^{-\alpha (x_j-\mu_j^l)}+e^{-\alpha (x_j-\mu_j^k)}+2e^{-\alpha (2x_j-\mu_j^l-\mu_j^k)}).
\end{equation}
We find a common denominator and multiply both sides by it then divide out $\alpha^2 e^{-\alpha (x_j-\mu_j^l)}$ to obtain: 

\begin{equation}
\begin{aligned}
0 =& 1+e^{-\alpha (\mu_j^l-\mu_j^k)}+2e^{-\alpha (x_j-\mu_j^k)} \\
&- (1+e^{-\alpha (x_j-\mu_j^k)})\prod_{i=1}^n (1+e^{-\alpha (x_i-\mu_i^k)})
\end{aligned}
\end{equation}
and we set $y_\ell = e^{-\alpha  x_\ell}$ resulting in a  multivariate polynomial:  
\begin{equation}
0 = p_j({\bf y }), \forall j\in\{1,2,\hdots,n\}
\end{equation}

We then have $n$ equations with $n$ unknown variables, we define our solution be the points: $r_1, r_2, ..., r_n$, then we consider the set of points $\mathcal{S}=\{s\in \R | s = \frac{\ln(r_i)}{-\alpha}\forall i\in \{1,2,\hdots,n\}\}$. We set $M_{kl} = \max_{s\in\mathcal{S}}\{E_{kl}(s)\}$.

Then the sum of the $ N_L \times n$ error terms,  $E_{kl}(x)$, we call $E_{\Lambda_l}(x)$. The sum of the $ N_L \times n$ maximal error terms,  $M_{kl}$, we call $M_{\Lambda_l}$. Since, by assumption, our error in approximating $f$ is zero, and the derivative of 1 is zero everywhere, we have that the total error in our Koopman approximation of the derivative of the state vector $x$ at time $t$ will be:\
\begin{equation}
\sum_{l = 1}^{N_L} E_{\Lambda_l}(x(t))
\end{equation}
Thus our error in estimating the state at time $t$ will be:
\begin{equation}
\begin{aligned}
\int_0^t\sum_{l = 1}^{N_L} & E_{\Lambda_l}(x(\tau)) d\tau \\
&\leq \int_0^t\sum_{l = 1}^{N_L}M_{\Lambda_l}d\tau \\
&= t \sum_{\ell = 1}^{N_L}M_{\Lambda_l}
\end{aligned}
\end{equation}

This holds true under the assumption that the approximation of the function, $f$ by logistic functions is a perfect approximation. In the case where there is error in the approximation of $f$ we have the following:

\begin{equation}
\dot{x_k} = f_k(x) = \delta_k(x) + \sum_{\ell=1}^{m}w_{k\ell}\Lambda_{v_l}(x)
\end{equation}
where $\delta_k(x)$ is the error when approximating $f$ at $x$.  The value of $\delta_k$ is tied to our mesh resolution parameter, $\epsilon$.  Thus, we have
\begin{equation}\label{mv_log_prime_w_error}
\begin{aligned}
\dot{\Lambda}_{v_l}(x)& = \sum_{i=1}^{n} \alpha(1 - \lambda_{\mu^l_i}(x_i)) \Lambda_{v_l}(x) f_i(x)\\ 
= \sum_{i=1}^{n} & \alpha(1 - \lambda_{\mu^l_i}(x_i)) \Lambda_{v_l}(x) \sum_{k=1}^{N_L}w_{ik} \Lambda_{v_k}(x)\\ 
 &+ \sum_{i=1}^{n} \delta_i(x)\alpha(1 - \lambda_{\mu^l_i}(x_i)) \Lambda_{v_l}(x) \\
= \sum_{i=1}^{n} & \sum_{k=1}^{N_L} \alpha(1 - \lambda_{\mu^l_i}(x_i)) w_{ik}  \Lambda_{v_l}(x) \Lambda_{v_k}(x) \\ 
 &+ \sum_{i=1}^{n} \delta_i(x)\alpha(1 - \lambda_{\mu^l_i}(x_i)) \Lambda_{v_l}(x) \\
\end{aligned}
\end{equation}
So, we add another $n$ error terms to our derivative approximation.  Each will be approximated by the multivariate logistic function $\Lambda_{v_l}$.  Thus the error of approximation of these terms for any given $k\in\{1,2,\hdots,n\}$ will be bounded by:
\begin{equation}
|\delta_k(x)(1-\lambda(x_k - \mu_k^l)) - 1| \leq |\delta_k(x) - 1|.
\end{equation}


Ultimately, the error when approximating the behavior of any element in the SILL basis is bounded.  We thus have that our choice of lifting has finite approximate closure which means that it may be used to extract stability properties.  We demonstrate this in two examples below.

\section{Numerical Examples}\label{sec:examples} 
\subsection{The Van der Pol Oscillator}

We consider the Van der Pol system.  This system features a stable limit cycle in the saddle region of its phase space.  The system thus presents a challenge, since it contains oscillatory and unstable dynamics all within the same phase space.  Arbabi and Mezic showed it was possible to use  Koopman representations to learn the asymptotic phase of the operator \cite{arbabi2016ergodic}. The equations for the system are below:
\begin{equation}
\begin{aligned}
\dot{x}_1 &= x_2 \\
\dot{x}_2 &= -x_1  + \alpha_1 (1 - x_1^2) x_2 
\end{aligned}
\end{equation}
where $\alpha_1$ is taken to be  $-0.2$ in all simulations. 

Our results show that we were able to learn the oscillatory dynamics in two regions of phase space (see Figure \ref{fig:vdp}A and \ref{fig:vdp}C).  However, the SILL functions were not able to predict the unstable dynamics of the Van Der Pol oscillator.  This was because the SILL functions had to be defined on a finite lattice.  The boundaries of the lattice incur the most error, since the vector field is not evaluated beyond the boundary region.  Specifically, for the Van der Pol oscillator, these boundaries coincided with unstable dynamics of the system.  Moreover, there is a numerical conditioning challenge with identifying a model with unstable modes.  

\subsection{The Bistable Toggle Switch}
We now consider a bistable toggle switch system as proposed in \cite{gardner2000construction}.  This system models the interaction between two proteins $LacI$ and $TetR$ who repress each other, resulting in one of two equilibrium points ultimately being reached depending on the initial concentrations of each of the two proteins. For simplicity, we refer to these proteins as protein 1 and protein 2 respectively. Given constants: $n_1, n_2, \alpha_1, \alpha_2, \delta \in \R$, the simplest model is a two state repression model \cite{gardner2000construction} of the form 
\begin{equation}
\begin{aligned}
\dot{x}_1 = \frac{\alpha_1}{1 + x_2^{n_1}} - \delta x_1 \\ 
\dot{x}_2 = \frac{\alpha_2}{1 + x_1^{n_2}} - \delta x_2 
\end{aligned}
\end{equation}
where $x_1, x_2$ are the concentrations of the respective proteins 1 and 2.  We note that given the proper parameters, and under a wide range of initial conditions, our SILL functions and their associated approximate Koopman generator correctly indicate the tendency of nearly every set of initial protein concentrations.  Specifically, the error in approximation was less than $1\%$ for both $f_1(x)$ and $f_2(x)$.

\begin{figure}[t]
\centering
\includegraphics[width=\columnwidth]{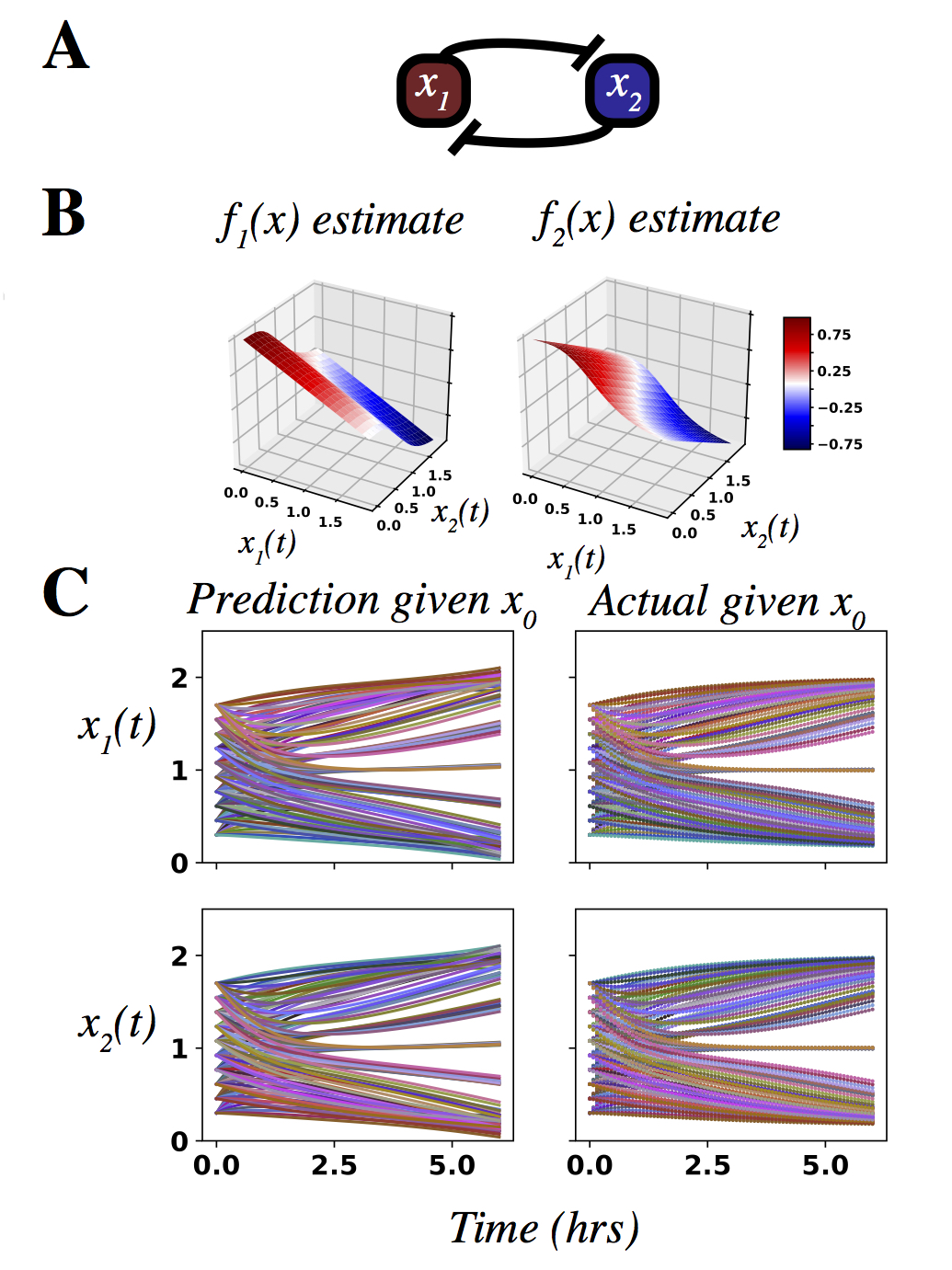}
\caption{(A) A diagram showing the mutual repressing architecture of the toggle switch from \cite{gardner2000construction}.  (B) The estimates for the vector field $f_1(x)$ and $f_2(x)$ generated by SILL basis of order 36.  The error in approximation was less than 1\% for both $f_1(x)$ and $f_2(x)$.  (C) Forward prediction of $x_1(t)$ and $x_2(t)$ given a distribution of initial conditions $x_0$. }
\label{fig:toggle_bistable}
\end{figure}

\section{Conclusions}

We set out to find finite dimensional approximations to Koopman generators for nonlinear systems.  We introduced a class of state-inclusive observable functions  comprised of products of logistic functions that confer an approximate finite closure property.   We derived error bounds for their approximation, in terms of a steepness $\alpha$ and a mesh resolution parameter $\epsilon.$   In particular, we show that introduction of SILL observable functions does not introduce unbounded error in the Koopman generator  approximation, since a mesh dictionary of SILL functions satisfies a total order property.  Further, the error bound can be reduced by modifying the learning parameters $\alpha$ and $\epsilon.$ 

In future work, we will study the use of structured regularization or structured sparse compressive sensing may result in a more efficient and concise set of Koopman dictionary functions. 

There are many scenarios where snapshots of the underlying system from different observers may each yield a scalable Koopman generator. The process of synthesizing or integrating these Koopman operators to obtain a global Koopman operator (coinciding with global measurements), is a subject of ongoing research.

\bibliographystyle{unsrt}
\bibliography{bibliography.bib}
 
\end{document}